\newtheorem{theorem}{Theorem}
\newtheorem{definition}{Definition}
\newtheorem{remark}{Remark}
\title{\LARGE \bf
Receding Horizon Planning with Rule Hierarchies \\ for Autonomous Vehicles
}
\author{Sushant Veer, Karen Leung, Ryan K. Cosner, Yuxiao Chen, Peter Karkus, and Marco Pavone%
\thanks{Sushant Veer, Yuxiao Chen, and Peter Karkus are with NVIDIA Research {\tt\small\{\{sveer,yuxiaoc,pkarkus\}@nvidia.com\}}. Karen Leung is with the University of Washington and NVIDIA Research {\tt\small \{kymleung@uw.edu, kaleung@nvidia.com\}}. Ryan Cosner is with the California Institute of Technology {\tt\small \{rkcosner@caltech.edu\}} (this work was conducted while Ryan was an intern at NVIDIA Research). Marco Pavone is with Stanford University and NVIDIA Research {\tt\small \{pavone@stanford.edu, mpavone@nvidia.com\}}.
}}
\begin{document}

\maketitle
\thispagestyle{empty}
\pagestyle{empty}

\begin{abstract}
Autonomous vehicles must often contend with conflicting planning requirements, e.g., safety and comfort could be at odds with each other if avoiding a collision calls for slamming the brakes. To resolve such conflicts, assigning importance ranking to rules (i.e., imposing a rule hierarchy) has been proposed, which, in turn, induces rankings on trajectories based on the importance of the rules they satisfy. On one hand, imposing rule hierarchies can enhance interpretability, but introduce combinatorial complexity to planning; while on the other hand, differentiable reward structures can be leveraged by modern gradient-based optimization tools, but are less interpretable and unintuitive to tune. In this paper, we present an approach to equivalently express rule hierarchies as differentiable reward structures amenable to modern gradient-based optimizers, thereby, achieving the best of both worlds. We achieve this by formulating \emph{rank-preserving reward functions} that are monotonic in the rank of the trajectories induced by the rule hierarchy; i.e., higher ranked trajectories receive higher reward. Equipped with a rule hierarchy and its corresponding rank-preserving reward function, we develop a two-stage planner that can efficiently resolve conflicting planning requirements. We demonstrate that our approach can generate motion plans in $\sim$7-10 Hz for various challenging road navigation and intersection negotiation scenarios. Our code for building STL rule hierarchies is made available at: \url{https://github.com/NVlabs/rule-hierarchies}.
\end{abstract}

\section{Introduction}
Autonomous Vehicles (AVs) must satisfy a plethora of rules pertaining to safety, traffic rules, passenger comfort, and progression towards the goal. These rules often, unfortunately, conflict with each other when unexpected events occur. For instance, avoiding collision with a stationary or dangerously slow non-ego vehicle on the highway might necessitate swerving on to the shoulder, violating the traffic rule to keep the shoulder clear. To resolve the juxtaposing requirements posed by these rules, ordering them according to their importance in a hierarchy was proposed in \cite{censi2019liability}, which induces a ranking on trajectories. Trajectories that satisfy higher importance rules in the hierarchy are ranked higher than those trajectories which satisfy lower importance rules; see Fig.~\ref{fig:anchor} for an illustration of rule hierarchies.

Rule hierarchies provide a systematic approach to plan motions that prioritize more important rules (e.g., safety) over the less important ones (e.g., comfort) in the event that all rules cannot be simultaneously satisfied. Furthermore, they offer greater transparency in planner design and are more amenable to introspection. However, rule hierarchies introduce significant combinatorial complexity to planning; in the worst-case, an $N$-level rule hierarchy could require solving $2^N$ optimization problems. An alternate to rule hierarchies is to plan with a ``flat" differentiable reward function comprised of weighted contributions from all the rules. The weights can be tuned manually or using data \cite{levine2012continuous,ng2000algorithms}. Although such flat differentiable reward structures are amenable to planning with standard optimization tools, they are less interpretable and diagnosable. AV developers are, therefore, faced with the choice between more interpretable but challenging to plan with rule heirarchies and more opaque but easy to plan with standard reward structures. In this paper, we show that rule hierarchies can, in fact, be unrolled in a principled manner into flat differentiable rewards by leveraging recent developments in differentiable Signal Temporal Logic (STL) representations \cite{leung2020back} and the notion of rank-preserving reward functions that we introduce. Hence, the flat differentiable representation of hierarchies achieves the best of both worlds: the expressive power of hierarchies and the computational advantages of gradient-based optimization through modern optimization tools.

\begin{figure}[t]
  \centering
    \includegraphics[width=0.45\textwidth]{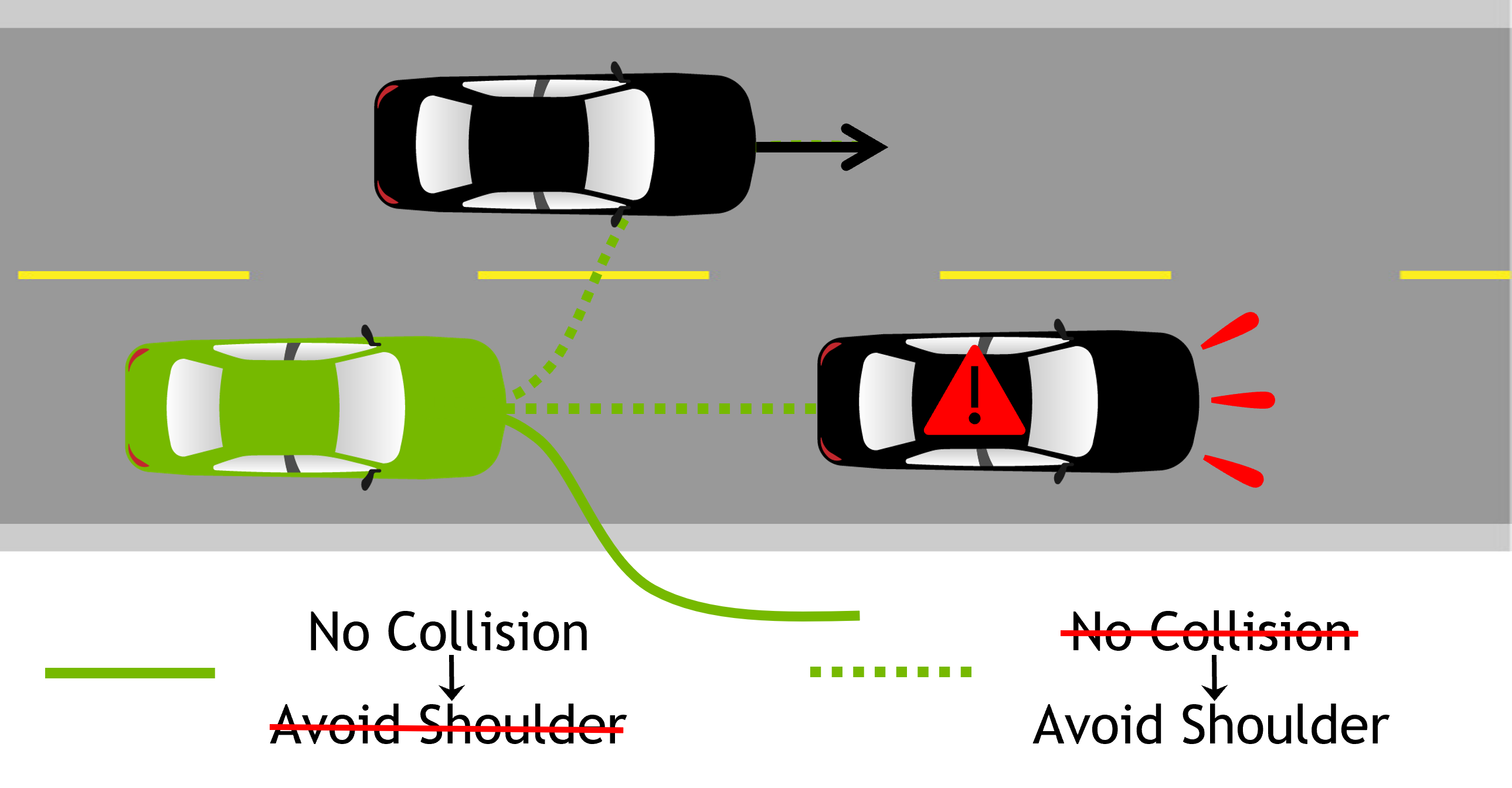}
    \vspace{-2mm}
  \caption{Illustration of rule hierarchy. Ego vehicle is green and non-ego vehicles are black. The rule hierarchy has two rules ordered in decreasing importance: (i) No collision, (ii) Avoid shoulder. Non-ego vehicle with exclamation suddenly brakes. Ego trajectory that prevents collision but does not avoid the shoulder (solid green) has a higher rank than the trajectories that collide (dashed green). \label{fig:anchor}}
  \vspace{-5mm}
\end{figure}

Our approach for planning with rule hierarchies only solves \emph{two} optimization problems (instead of the worst-case $2^N$) regardless of the choice of $N$. The key to achieving this is the formulation of a \emph{rank-preserving reward function} $R$ that exhibits the following property: higher ranked trajectories that satisfy more important rules receive a higher reward compared to trajectories that only satisfy lower importance rules. Maximizing this reward directly allows our planner to choose trajectories that have a higher rule-priority ordering without checking all combinations of rules. However, this reward function is highly nonlinear and suffers from an abundance of local maxima. To overcome this challenge, we use a two-stage planning approach: the first stage searches through a finite set of primitive trajectories and selects the trajectory that satisfies the highest priority rules; the second stage warm starts a continuous optimization for maximizing the reward $R$ with the trajectory supplied by the first-stage. To the best of our knowledge, this is the first work that plans for AVs in real-time with rule hierarchies. Prior work \cite{xiao2021rule,xiao2021ruleJournal} considered tracking an a priori given reference trajectory under rule hierarchies; in this paper, we do not assume the availability of an a priori reference trajectory.

\noindent \textbf{Statement of Contributions.} Our contributions are threefold: (i) We introduce the notion of a rank-preserving reward function in Definition~\ref{def:rank} and provide a systematic approach to constructing such a function for any given rule hierarchy in Theorem~\ref{thm:rank-preserve} and Remark~\ref{rem:diff-reward}. (ii) We present a two-stage receding-horizon planning approach that leverages the rank-preserving reward to efficiently plan with rule hierarchies and operates at a frequency of around $\sim$7-10 Hz. (iii) We demonstrate the ability of the rank-preserving reward based planner to rapidly adapt to challenging road navigation and intersection negotiation scenarios online without any scenario-specific hyperparameter tuning.

\section{Related Works}

\noindent \textbf{Tuning Reward/Cost Functions.} Inverse optimal control \cite{levine2012continuous} and inverse reinforcement learning \cite{ng2000algorithms} offer approaches to tune the contribution of competing criteria in the total reward/cost function by harnessing data. Various recent works use data to learn such reward functions for autonomous vehicles by learning a trajectory scoring function \cite{phan2022driving}, for planners that combine behavior generation and local motion planning \cite{rosbach2019driving}, and by leveraging trajectory preferences of an expert \cite{Sadigh2017active,katz2021preference}, among many others. Due to the challenges associated with learning reward functions and then using them in traditional trajectory optimizers, some approaches directly learn a data-driven policy via imitation learning \cite{vitelli2022safetynet} or reinforcement learning \cite{cao2020reinforcement}. In this paper, we present a method that, given a rule hierarchy, analytically provides a ``weighting" of the reward terms associated with each rule without requiring any additional data; see Theorem~\ref{thm:rank-preserve}.

\noindent \textbf{Hierarchical Specifications.} The use of hierarchies for negotiating between various criteria in an optimization was proposed as early as 1967 \cite{waltz1967engineering} to avoid tuning the weights on individual cost functions in their aggregated weighted sum. Hierarchical optimizations have also appeared in the field of logic programming \cite{wilson1993hierarchical} to ``optimally" negotiate between strict requirements and non-strict preferences. Recently, satisfiability modulo theories \cite{barrett2018satisfiability,shoukry2017linear} have been used to detect \cite{zhang2021systematic} and react to traffic-rule violations \cite{lin2022rule}. Notions of maximum satisfaction and minimum violation of rules \cite{tuumova2013minimum,dimitrova2018maximum,mehdipour2020specifying} have also been studied in the Linear Temporal Logic (LTL) and Signal Temporal Logic (STL) literature. Philosophically our paper shares the approach of formulating a scalar-valued function that reflects the rank of rule satisfaction with \cite{tuumova2013minimum} and \cite{mehdipour2020specifying}. However, \cite{tuumova2013minimum} considers finite-state dynamics, while the conjunction aggregation in \cite[Equation (7)]{mehdipour2020specifying} does not satisfy the \emph{strict} rank-preserving property we enforce in Definition~\ref{def:reward}; i.e., if the rank of a trajectory is higher, then its reward should be strictly higher. The strictness of Definition~\ref{def:reward} facilitates efficient planning by clearly distinguishing trajectories with different ranks. Unlike the papers discussed above, we formulate a reward function in Theorem~\ref{thm:rank-preserve} which satisfies the strict Definition~\ref{def:reward} and then use it to plan motions for AVs in complex driving scenarios.

\noindent \textbf{Rule Hierarchies for AVs.} Recently, rulebooks---which are a pre-ordering of rules---for AVs were proposed in \cite{censi2019liability} as ``blue-prints" for desirable driving preferences. The rulebook, being a pre-ordering, can be adapted to specific geographic and cultural driving norms into total-order rulebooks for planning and control while retaining the hierarchy structure prescribed by the pre-order. The consistency and completeness of rulebooks were investigated in \cite{phan2019towards} using formal methods, \cite{collin2020safety} used rulebooks for verification and validation of motion plans, and \cite{helou2021reasonable} used rulebooks to model human-driving behavior. A control strategy to track an a priori given reference trajectory while satisfying a total-order rulebook was presented in \cite{xiao2021rule,xiao2021ruleJournal}. We remark that none of the above mentioned papers generate motion plans for AVs that adhere to rule hierarchies \emph{online}, unlike our paper.

\section{Problem Formulation}

\noindent \textbf{Ego Dynamics.} We refer to the AV as the ego vehicle. Let $x\in\mathcal{X}\subseteq\mathbb{R}^n$ be the ego's state and $u\in\mathcal{U}\subseteq\mathbb{R}^m$ be the control inputs. The ego exhibits discrete-time dynamics:
\begin{align}\label{eq:dyn}
    x_{t+1} = f(x_t,u_t) ,
\end{align}
where $t$ represents the discrete-time and $f:\mathcal{X}\times\mathcal{U}\to\mathcal{X}$ is continuously differentiable. Let $\mathcal{S}$ represent the space of trajectories generated by \eqref{eq:dyn} starting from any admissible initial state $x_0\in\mathcal{X}$ and evolving under the influence of any control sequence $u_{0:T}$ spanning a horizon of $T$ time-steps.

\noindent \textbf{Non-ego Agents and Map.} We denote the state of non-ego agents by $x_{\rm ne}$ and the world map which contains information regarding lane lines, stop signs, etc. by $x_{\rm map}$. For notational convenience, we augment non-ego agent state trajectories $x_{\mathrm{ne},0:T}$ and the world map $x_{\rm map}$ to create the world scene $w:=(x_{\mathrm{ne},0:T}, x_{\rm map})\in\mathcal{W}$. In practice, the planner will receive a predicted world scene $\hat{w}$ which is generated from maps and by the prediction \cite{schmerling2018multimodal,salzmann2020trajectron++} modules in an AV stack. However, dealing with these modules is beyond the scope of this paper which focuses on planning with rule hierarchies. Therefore, in the interest of staying focused on the planner, we assume availability of the world scene $w$. We discuss possible extensions with predicted $\hat{w}$ in Section~\ref{sec:conclusions}.

\noindent \textbf{Rules and Rule Robustness.} We define a rule $\phi : \mathcal{S}\times\mathcal{W}\to\{\mathtt{True,False}\}$ as a boolean function that maps an ego trajectory $x_{0:T}\in\mathcal{S}$ and the world scene $w\in\mathcal{W}$ to \texttt{True} or \texttt{False}, depending on whether the trajectory satisfies the rule or not. The robustness $\hat{\rho}_i: \mathcal{S}\times\mathcal{W} \to \mathbb{R}$ of a rule $\phi_i$ is a metric that provides the degree of satisfaction for a rule. The robustness is a positive scalar if the rule is satisfied and negative otherwise; more positive values indicate greater satisfaction of the rule while more negative values indicate greater violation. We express the rules as STL formulae \cite{maler2004monitoring} using STLCG \cite{leung2020back}, which comes equipped with backpropagatable robustness metrics. Expressing rules as STL formulae allows us to easily encode complex spatio-temporal specifications, such as stop in front of a stop sign for at least 1 second before driving on.

\noindent \textbf{Rule Hierarchy.} A rule hierarchy $\varphi$ is defined as a sequence of rules $\varphi:=\{\phi_i\}_{i=1}^N$ where the highest priority rule is indexed by $1$ and lowest by $N$. The robustness of a rule hierarchy $\varphi$ for a trajectory $x_{0:T}\in\mathcal{S}$ in a world scene $w\in\mathcal{W}$ is an $N$-dimensional vector-valued function whose elements comprise the robustness of the $N$-rules in $\varphi$, expressed formally as $\hat{\rho}: (x_{0:T}, w) \mapsto (\hat{\rho}_1(x_{0:T}, w),\cdots,\hat{\rho}_N(x_{0:T}, w))$.

\noindent \textbf{Rank of a Trajectory.} The rule hierarchy gives rise to a total 
\begin{wraptable}{l}{0.20\textwidth}
    \centering
    \resizebox{0.20\textwidth}{!}{
        \begin{tabular}{cc}
            \toprule
            \textbf{Rank} & \textbf{Satisfied Rules}  \\ 
            \midrule
            1 & $\phi_1$, $\phi_2$, $\phi_3$ \\ 
            2 & $\phi_1$, $\phi_2$ \\ 
            3 & $\phi_1$, $\phi_3$ \\ 
            4 & $\phi_1$ \\ 
            5 & $\phi_2$, $\phi_3$ \\ 
            6 & $\phi_2$ \\ 
            7 & $\phi_3$ \\ 
            8 & $\emptyset$ \\ 
            \bottomrule
        \end{tabular}
    }
    \caption{Illustration of trajectory ranks for three rules.}
    \label{tab:order-example}
\end{wraptable}
order on trajectories. If a trajectory satisfies all the rules, then it has the highest rank in the order, while if it satisfies all rules but the lowest priority rule, then it has second rank, and so on; see Table~\ref{tab:order-example} for further clarity on trajectory ranking via a 3-rule example. Given a trajectory $x_{0:T}$ and the world scene $w$, let the robustness vector for the trajectory be defined as $\rho:=\hat{\rho}(x_{0:T},w)$. Using the robustness vector we formally define the rank of a trajectory:

\begin{definition}[Rank of a Trajectory]\label{def:rank}
Let $\varphi$ be a rule hierarchy with $N$ rules. Given a trajectory $x_{0:T}$ and the world scene $w$, let $\rho:=(\rho_1, \rho_2,\cdots\rho_N)$ be the robustness vector of the trajectory as defined above. Let $\mathrm{step}:\mathbb{R}\to\{0,1\}$ map negative real numbers to $0$ and all other real numbers to $1$. Then the rank $r:\mathbb{R}^N\to\{1, 2,\cdots, 2^N\}$ is defined as:
\begin{align}
    r(\rho):=2^N - \sum_{i=1}^N 2^{N-i} \mathrm{step}(\rho_i) .
\end{align}
\end{definition}

\noindent \textbf{Problem.} We want to solve the following optimization to obtain control inputs that result in a trajectory with the highest achievable rank in accordance with a rule hierarchy:
\begin{align} \label{eq:opt}
    \begin{aligned}
        \min_{u_{0:T}} & ~~~~ r\circ\hat{\rho}(x_{0:T},w) \\
        \text{s.t.} & ~~~~ x_{t+1} = f(x_t, u_t), ~\mathrm{for}~t=1,\cdots,T .
    \end{aligned}
\end{align}
In particular, we want to solve this problem efficiently without checking all $2^N$ combinations of rule satisfaction. Note that the constraints for this optimization, such as control bounds, can be baked in the rule hierarchy $\varphi$.

\section{Rank-Preserving Reward Function}

In this section, we present a differentiable rank-preserving reward function which enables us to circumvent the combinatorial challenges in solving \eqref{eq:opt} na\"ively. We first define the notion of a rank-preserving reward function that assigns higher rewards for trajectories with higher rank and lower rewards for trajectories with lower rank.

\begin{definition}[Rank-Preserving Reward Function] \label{def:reward}
A rank-preserving reward function $R:\rho\mapsto R(\rho)\in \mathbb{R}$ satisfies:
\begin{align}
   r(\rho) < r(\rho') \implies R(\rho) > R(\rho') \enspace.
\end{align}
\end{definition}

Definition~\ref{def:reward} does not impose any requirement on the reward if $r(\rho) = r(\hat{\rho})$, i.e., if two trajectories have the same rank. The choice of how the reward should serve as a tie-breaker in this event is left to the designer. In the next theorem we provide a candidate hierarchy-preserving reward and then rigorously show that it satisfies Definition~\ref{def:reward}.

\begin{theorem}[Rank-Preserving Reward Function] \label{thm:rank-preserve}
Let $a > 2$ and let $\rho_i\in[-a/2,a/2]$ for all $i\in\{1,2,\cdots,N\}$. Let the reward function $R$ be defined as follows:
\begin{equation}\label{eq:reward}
    R(\rho):= \sum_{i=1}^N \bigg(a^{N-i+1} \mathrm{step}(\rho_i) + \frac{1}{N} \rho_i\bigg) .
\end{equation}
Then $R$ satisfies Definition~\ref{def:reward}.
\end{theorem}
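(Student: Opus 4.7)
The plan is to decompose the reward as $R = R_{\mathrm{step}} + R_{\mathrm{lin}}$, where $R_{\mathrm{step}}(\rho) := \sum_{i=1}^N a^{N-i+1}\,\mathrm{step}(\rho_i)$ is the ``base-$a$ analogue'' of the binary rank encoding $2^N - r(\rho)$ appearing in Definition~\ref{def:rank}, and $R_{\mathrm{lin}}(\rho) := \frac{1}{N}\sum_{i=1}^N \rho_i$ is a bounded perturbation. The intuition is that two trajectories of distinct ranks must differ in their step patterns, and the resulting multiplicative gap in $R_{\mathrm{step}}$ is large enough that the linear term cannot close it when $a > 2$.

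First I would translate the rank inequality into combinatorial structure on the step vectors. Writing $s_i := \mathrm{step}(\rho_i)$ and $s'_i := \mathrm{step}(\rho'_i)$, Definition~\ref{def:rank} gives $r(\rho) < r(\rho')$ iff $\sum_i 2^{N-i} s_i > \sum_i 2^{N-i} s'_i$; reading both sides as $N$-bit binary integers, there is a smallest index $k \in \{1, \ldots, N\}$ where the bits disagree, and at that index $s_k = 1$, $s'_k = 0$.

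Next I would lower-bound the step-part gap by considering the worst case in which every higher-indexed bit of $\rho$ is $0$ while that of $\rho'$ is $1$, giving, via the geometric series,
\begin{align*}
R_{\mathrm{step}}(\rho) - R_{\mathrm{step}}(\rho') \geq a^{N-k+1} - \sum_{i=k+1}^N a^{N-i+1} = \frac{(a-2)\,a^{N-k+1} + a}{a-1}.
\end{align*}
Pairing this with the crude bound $|R_{\mathrm{lin}}(\rho) - R_{\mathrm{lin}}(\rho')| \leq a$ (from $\rho_i, \rho'_i \in [-a/2, a/2]$) and simplifying, I obtain
\begin{align*}
R(\rho) - R(\rho') \;\geq\; \frac{(a-2)\bigl(a^{N-k+1} - a\bigr)}{a-1},
\end{align*}
which is strictly positive whenever $k < N$ because $a > 2$.

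The main obstacle I anticipate is the boundary case $k = N$, where $a^{N-k+1} = a$ and the above bound degenerates to zero, i.e., the $-a$ slack in the linear term is too pessimistic. To close this gap I would exploit the sign structure: when $k = N$, the bits $s_i = s'_i$ agree for all $i < N$, forcing $|\rho_i - \rho'_i| \leq a/2$ at those indices, while at $i = N$ we have $\rho_N \geq 0 > \rho'_N$ so $\rho_N - \rho'_N > 0$. These refined bounds yield $R_{\mathrm{lin}}(\rho) - R_{\mathrm{lin}}(\rho') > -\frac{(N-1)a}{2N}$, which combined with the exact step gap $R_{\mathrm{step}}(\rho) - R_{\mathrm{step}}(\rho') = a$ in this case gives $R(\rho) - R(\rho') > \frac{(N+1)a}{2N} > 0$, closing the argument.
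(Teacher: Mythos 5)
Your proof is correct and follows essentially the same route as the paper's appendix: identify the most significant index $k$ at which the step vectors disagree (necessarily with $\mathrm{step}(\rho_k)=1$ and $\mathrm{step}(\rho'_k)=0$, by the binary-integer reading of Definition~\ref{def:rank}), dominate the lower-priority step terms by the geometric series $\sum_{i=k+1}^N a^{N-i+1} = a(a^{N-k}-1)/(a-1)$ using $a>2$, and absorb the linear term with the crude bound $\bigl|\tfrac{1}{N}\sum_j(\rho_j-\rho'_j)\bigr|\le a$. The one place where you go beyond the paper is the boundary case $k=N$. There the paper's Claim~1 degenerates to $0 < a^{1}-a = 0$, which is false (the empty sum equals the right-hand side), so the paper's chain of inequalities does not actually deliver the required \emph{strict} inequality when the two trajectories differ only in the lowest-priority rule and the linear terms are at their extremes. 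Your refinement---observing that $\mathrm{step}(\rho_i)=\mathrm{step}(\rho'_i)$ for $i<N$ forces $|\rho_i-\rho'_i|\le a/2$ at those indices while $\rho_N\ge 0>\rho'_N$ gives $\rho_N-\rho'_N>0$, yielding the positive margin $\tfrac{(N+1)a}{2N}$---correctly closes this gap and is a genuine (if small) improvement over the argument as printed.
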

The proof of Theorem~\ref{thm:rank-preserve} is presented in the Appendix.
The key idea behind the construction of \eqref{eq:reward} is to ensure that the reward contribution on satisfaction of rule $i$ should exceed the sum of the reward contributions by all rules with lower priority. This is achieved by multiplying the step function with a constant that grows exponentially with the priority of a rule. To distinguish between trajectories that have the same rank, we use the average robustness for all $N$ rules as a criterion; note that the $\rho_i/N$ term in \eqref{eq:reward} sums up to the average robustness across all rules in the hierarchy.

\begin{remark}[Differentiable Reward] \label{rem:diff-reward}
    The reward \eqref{eq:reward} is not differentiable since it involves step functions. To facilitate continuous optimization using this reward, we approximate the step functions by sigmoids as follows:
    \begin{equation}\label{eq:reward-sigmoid}
        R(\rho):= \sum_{i=1}^N \bigg(a^{N-i+1} \mathrm{sigmoid}(c\rho_i) + \frac{1}{N} \rho_i\bigg) ,
    \end{equation}
    where $c>0$ is a scaling constants which is chosen to be large to mimic the step function.
\end{remark}

To provide more intuition on the reward function, in Fig.~\ref{fig:reward}, we plot the differentiable reward \eqref{eq:reward-sigmoid} of a 2-rule hierarchy (with $a=2.01$, $c=30$) by varying robustness $\rho_1$ and $\rho_2$ in $[-1,1]$. The quadrant where $\rho_1,\rho_2<0$ (neither rule satisfied) has the lowest reward while the quadrant where $\rho_1,\rho_2>0$ (both rules satisfied) has the highest reward. Of the remaining two quadrants, the one with $\rho_1>0$ (more important rule satisfied) has a higher reward than the one with $\rho_2>0$ (less important rule satisfied). These observations align well with Definition~\ref{def:reward}.

\begin{figure}[b]
  \centering
    \includegraphics[trim={2cm 1cm 2cm 2.5cm},clip,width=0.35\textwidth]{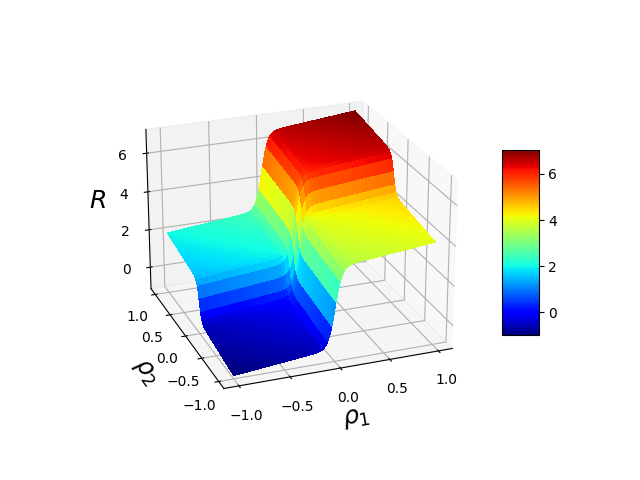}
  \caption{Visualization of the reward for a 2-rule hierarchy. \label{fig:reward}}
\end{figure}

\section{Receding Horizon Planning \\ With Rule Hierarchies}
\label{sec:motion-planner}

Equipped with a rule hierarchy and a method to cast it into a (nonlinear) differentiable reward function, we now present a two-stage algorithm to tractably solve (3). In the following sections, we describe the two stages: a coarse trajectory selection followed by a refinement process.

\subsection{Stage 1: Planning with Motion Primitives}
\label{subsec:planning-primitives-stage-1}
The objective of the first stage is to generate a coarse initial trajectory for warm-starting the continuous optimizer in the second stage; we note that such multi-stage planning is common in the literature \cite{rosolia2016autonomous,liniger2015optimization}. We achieve this by selecting the trajectory with the largest rule-hierarchy reward $R$ from a finite family of motion primitives. To generate the family of primitives $\mathcal{T}$, we use the approach presented in \cite{schmerling2018multimodal}. We first choose a set $\mathcal{M}$ of open-loop controls that coarsely cover a variety of actions that the AV can take; e.g., \texttt{(accelerate, turn right)}, \texttt{(decelerate, keep straight)}, etc. With the open-loop controls, the dynamics are propagated forward from the initial state for $\tau$ time-steps to generate $|\mathcal{M}|$ branches, where $|\mathcal{M}|$ is the cardinality of $\mathcal{M}$. At the terminal nodes of each of these $|\mathcal{M}|$ branches, all control actions are applied again for $\tau$ time-steps. This process is inductively repeated for the entire time horizon $T$ to produce a tree with $|\mathcal{M}|^{\lceil T/\tau \rceil}$ branches; see Fig.~\ref{fig:motion-primitive} for visualizing the motion primitives. The tree can be created efficiently by parallelizing the branch generation. 

\begin{figure}[b]
  \centering
    \includegraphics[trim={0 0 0 2mm},clip,width=0.40\textwidth]{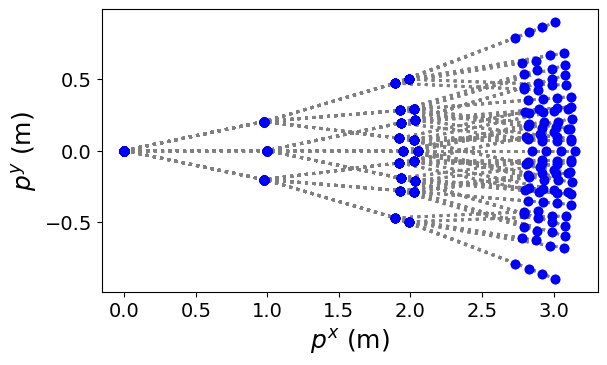}
  \caption{Motion primitive tree for $|\mathcal{M}|=6$, $T=3$, and $\tau=1$ at vehicle position $(0,0)$ m, heading $0$ rad, and speed $10$ m/s. \label{fig:motion-primitive}}
\end{figure}

\subsection{Stage 2: Continuous Trajectory Optimization}
\label{subsec:planning-continuous-stage-2}
The objective of the second stage is to refine the trajectory obtained from the first stage by solving the following optimization problem
\begin{align}\label{eq:cont-optimizer}
    \begin{aligned}
        \min_{u_{0:T}} & ~~~~ -R\circ\hat{\rho}(x_{0:T}, w) \\
        \text{s.t.} & ~~~~ x_{t+1} = f(x_t,u_t), ~\mathrm{for}~t=1,\cdots,T \enspace.
    \end{aligned}
\end{align}
We reiterate that the reward function is analytically differentiable due to Remark~\ref{rem:diff-reward} and the differentiability of the robustness of STL formulae afforded by STLCG \cite{leung2020back}. Hence, we compute the gradients analytically and use the Adam optimizer \cite{kingma2014adam} to solve \eqref{eq:cont-optimizer}. Owing to the lack of convexity of this optimization problem, we have no assurance of global optimality. Regardless, even convergence to a local optima in the vicinity of the initialization improves the trajectory's compliance to the rule hierarchy.

A psuedo-code for our two-stage motion planner is provided in Algorithm~\ref{alg:planner} below.
\begin{algorithm}[b]
\caption{Planning with a Rule Hierarchy \label{alg:planner}}
\small
\begin{algorithmic}[1]
	\State \textbf{Input:} Reward function $R$ for the rule hierarchy $\varphi$
    \State \textbf{Hyperparameters:} total planning time $\overline{T}$, planning horizon $T$, number of time-steps to execute $t_{\rm execute}$ 
    \State \textbf{Hyperparameters for Planning with Primitives:} set of open-loop controls for primitive tree generation $\mathcal{M}$, number of time-steps $\tau$ for which a control in $\mathcal{M}$ is executed 
    \State \textbf{Hyperparameters for Continuous Optimizer:} learning rate $\mathtt{lr}$, maximum iterations $K$, 
	\While{$t<\overline{T}$}
        \State $x_t \leftarrow$ \texttt{updateEgoState()}
        \State $w \leftarrow$ \texttt{updateWorldScene()}
        \State $\mathcal{T} \leftarrow$ \texttt{generatePrimitiveTree}$(x_t,\mathcal{M}, \tau, T)$
        \State $u_{t:t+T} \leftarrow \arg\max_{\mathcal{T}} R$
        \For{$k$ in $1:K$}
            \State Compute $-\nabla_{u_{t:t+T}}R$
    	    \State $u_{t:t+T} \leftarrow$ \texttt{Adam}$(u_{t:t+T}, -\nabla_{u_{t:t+T}}R, \mathtt{lr})$
         \EndFor
         \State \texttt{Execute}($u_{t:t+t_{\rm execute}}$)
        \State $t\leftarrow t+t_{\rm execute}$
    \EndWhile
	\end{algorithmic}
\normalsize
\end{algorithm}

\section{Experimental Evaluation}
\label{sec:results}

In this section we demonstrate the ability of our rule-hierarchy based receding horizon planner to navigate various complex driving scenarios while maintaining a high planning frequency. The simulations are performed in Python using the \texttt{highway-gym} \cite{highway-env} and tested on a desktop computer with an \texttt{AMD Threadripper Pro 3975WX} CPU and an \texttt{NVIDIA RTX 3090} GPU. We use PyTorch \cite{PyTorch} for motion planning to facilitate parallelization on GPU and analytical gradient computation via backpropagation and use STLCG \cite{leung2020back} to encode STL formulae. We have made public our code for building general-purpose STL rule hierarchies with the functionality to compute the rank-preserving reward at: \url{https://github.com/NVlabs/rule-hierarchies}.

\noindent \textbf{Dynamics.} We use the kinematic bicycle model \cite{kong2015kinematic} for the ego vehicle's dynamics. The state $x:=(p^x, p^y, \psi, v)$ comprises of the position $(p^x, p^y)$ of the vehicle's center in an inertial frame of reference, its orientation $\psi$ measured with respect to the horizontal axis of the inertial frame, and its heading velocity in the vehicle's frame of reference. The control inputs $u:=(\alpha,\delta)$ comprise of the acceleration $\alpha$ and the steering angle $\delta$; see \cite[Section~II]{kong2015kinematic} for further details.

\noindent \textbf{Hierarchy-preserving Reward Function.} We use the reward function \eqref{eq:reward-sigmoid} described in Remark~\ref{rem:diff-reward} with $a=2.01$. The robustness values directly returned by STLCG \cite{leung2020back} do not necessarily lie within $[-a/2,a/2]$, as is required by Theorem~\ref{thm:rank-preserve}. To remedy this, we scale the robustness values using $\tanh$. Let $\rho_{\rm STLCG}$ be the robustness of an STL formula provided by STLCG. Then we use $\rho=\tanh(\rho_{\rm STLCG}/s)$ as the robustness in our reward function \eqref{eq:reward-sigmoid}, where $s>0$ is a scaling constant chosen based on the range of values that $\rho_{\rm STLCG}$ takes. We remark that in our study, choosing $s$ was straightforward and did not require much tuning.

\noindent \textbf{Planning.} For the first planning stage, the motion primitives are generated using the method described in Section~\ref{subsec:planning-primitives-stage-1}. We choose $\mathcal{M}$ as the mesh grid generated by the Cartesian product of the accelerations $\{-5, 5\}~\mathrm{m/s}^2$ with the steering angles $\{-\pi/8, 0, \pi/8\}~\mathrm{rad}$; consequently, $|\mathcal{M}|=6$. The motion primitive tree is generated for $T=10$ and $\tau=2$, and therefore, has $6^5$ branches. The tree generation and choosing the branch with the highest reward takes less than $0.04~\mathrm{s}$ due to parallelization on the GPU. A sample motion primitive tree is plotted in Fig.~\ref{fig:motion-primitive} for visualization.
As described in Section~\ref{subsec:planning-continuous-stage-2}, the second-stage of the planner performs continuous trajectory optimization using Adam \cite{kingma2014adam} to solve \eqref{eq:cont-optimizer}. We set the learning rate at $0.01$ and the maximum iterations $K$ (see line 10 in Algorithm~\ref{alg:planner}) as $10$. For all forthcoming examples, we use a planning horizon of $T=10$ and execute only the first action (i.e. $t_{\rm execute}=1$; see line 15 in Algorithm~\ref{alg:planner}) from the plan before replanning.

\subsection{Road Navigation}
\label{subsec:road-navigation}

We use six rules in the rule hierarchy, listed in Table~\ref{tab:highway-rule-hierarchy}, for the scenarios in Fig.~\ref{fig:road-nav}. The highest priority rule is collision avoidance which requires the ego's position to lie outside a $10~\mathrm{m}\times 4~\mathrm{m}$ rectangluar zone around both non-ego vehicles. The second and third rules incentivize the ego vehicle to not cross the solid-white lane line on its right and the dashed-white lane line on its left, respectively. Not crossing the solid-white lane line is prioritized over dashed-white lane line in accordance with common traffic laws. The fourth rule incentivizes the ego to be aligned with the lane by the end of the planning horizon to prevent oscillations about the lane center. Finally, the last two rules require the ego to maintain a lower and upper speed limit. The lower speed limit ensures that the ego vehicle only stops if necessary.

\subsubsection{Overtake from lane} In this scenario, shown in Fig.~\ref{subfig:lane-overtake}, the yellow ego vehicle is moving at a sufficiently high speed that by the time it observes the stationary orange vehicle, it cannot stop in time to avoid a collision. Therefore, a lane change is the only option to avoid a collision with the orange car. The blue non-ego vehicle is significantly faster than the ego vehicle, resulting in a gap on the left lane. Hence, our planner relaxes rule number 3 in Table~\ref{tab:highway-rule-hierarchy}.

\subsubsection{Overtake from shoulder} This scenario, shown in Fig.~\ref{subfig:shoulder-overtake}, is similar to the scenario discussed above, with the salient difference being that the blue non-ego vehicle moves at a similar speed as the yellow ego vehicle. Hence, changing the lane is not viable as it would result in a collision with the blue vehicle. Therefore, our planner relaxes rule 2 in Table~\ref{tab:highway-rule-hierarchy} to overtake the orange car from the shoulder.

\subsubsection{Stop instead of overtake} In this scenario, shown in Fig.~\ref{subfig:stop-instead-of-overtake}, the only difference from the scenario in Fig.~\ref{subfig:shoulder-overtake} is that the ego vehicle is moving slow enough to be able to stop before colliding with the orange vehicle. In compliance with the rule hierarchy in Table~\ref{tab:highway-rule-hierarchy}, violating the lower priority speed rules is preferable to changing lanes, hence, the planner relaxes rule 5 and brings the ego vehicle to a halt.

\subsubsection{Double-parked vehicle} In this scenario, shown in Fig.~\ref{subfig:double-parked}, the stationary orange vehicle is not blocking the entire lane, but is, instead, double-parked. The ego vehicle is able to navigate around the orange vehicle without ever leaving its lane. Our planner is able to find a trajectory that complies with all the rules in the rule hierarchy.

\begin{table}
    \centering
    \begin{tabular}{cc}
        \toprule
        \textbf{Priority} & \textbf{Rule Description} \\ 
        \midrule
        1 & No Collision \\ 
        2 & Do not cross solid lane line \\ 
        3 & Do not cross dashed lane line \\ 
        4 & Orient along the lane by the end of the planning horizon \\ 
        5 & Speed $\geq 2~\mathrm{m/s}$ \\ 
        6 & Speed $\leq 15~\mathrm{m/s}$ \\ 
        \bottomrule
    \end{tabular}
    \caption{Rule hierarchy for road navigation scenarios. \label{tab:highway-rule-hierarchy}}
    \vspace{-3mm}
\end{table}

\begin{figure*}[t]
  \centering
  \subfigure[Overtake from adjacent lane]
  {
    \includegraphics[trim={0 0 0 1mm},clip,width=0.47\textwidth]{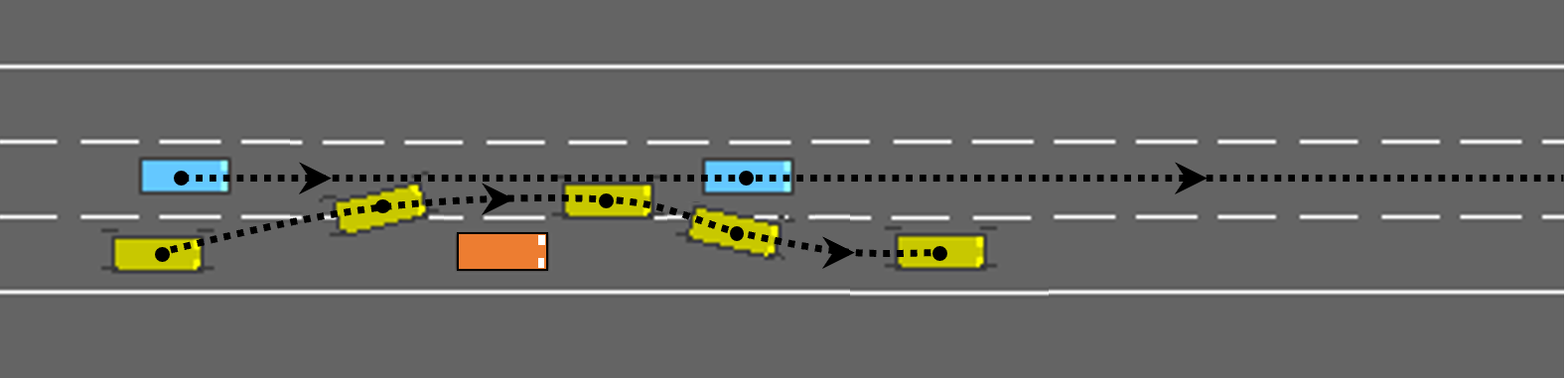}
    \label{subfig:lane-overtake}
  }
  \subfigure[Overtake from shoulder]
  {
    \includegraphics[width=0.47\textwidth]{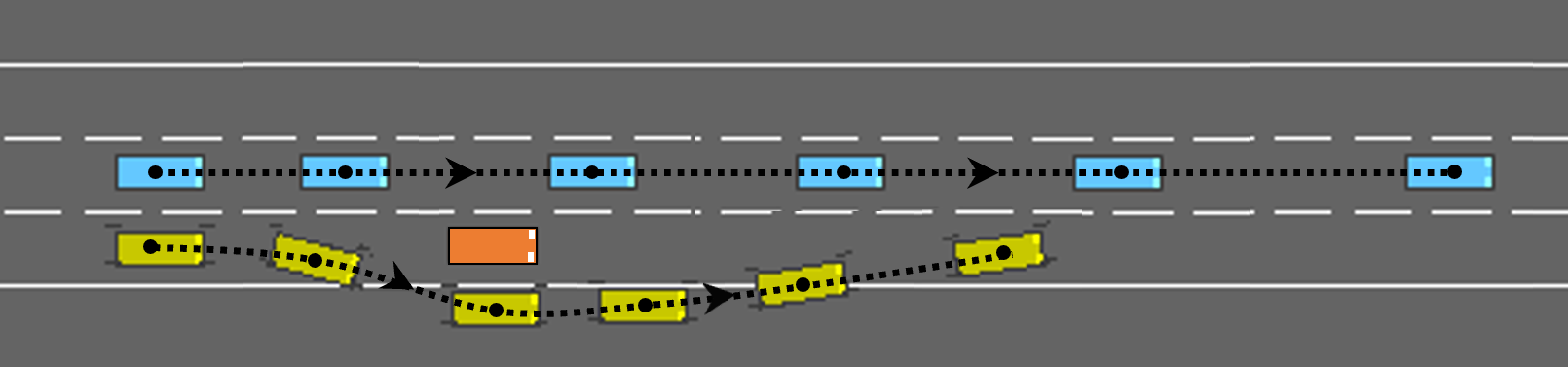}
    \label{subfig:shoulder-overtake}
  }
    \subfigure[Stop instead of overtake]
  {
    \includegraphics[width=0.47\textwidth]{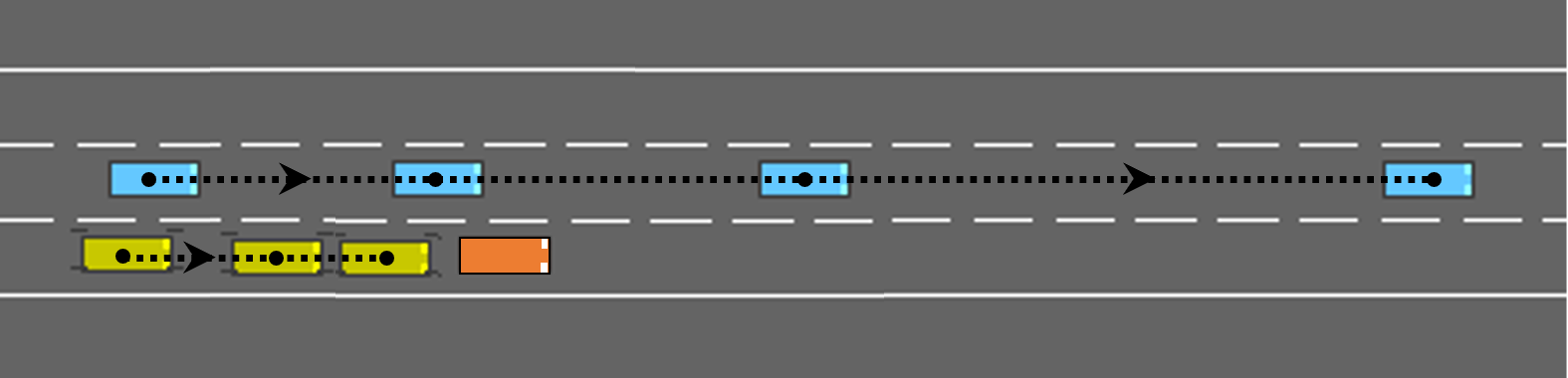}
    \label{subfig:stop-instead-of-overtake}
  }
    \subfigure[Double-parked vehicle]
  {
    \includegraphics[width=0.47\textwidth]{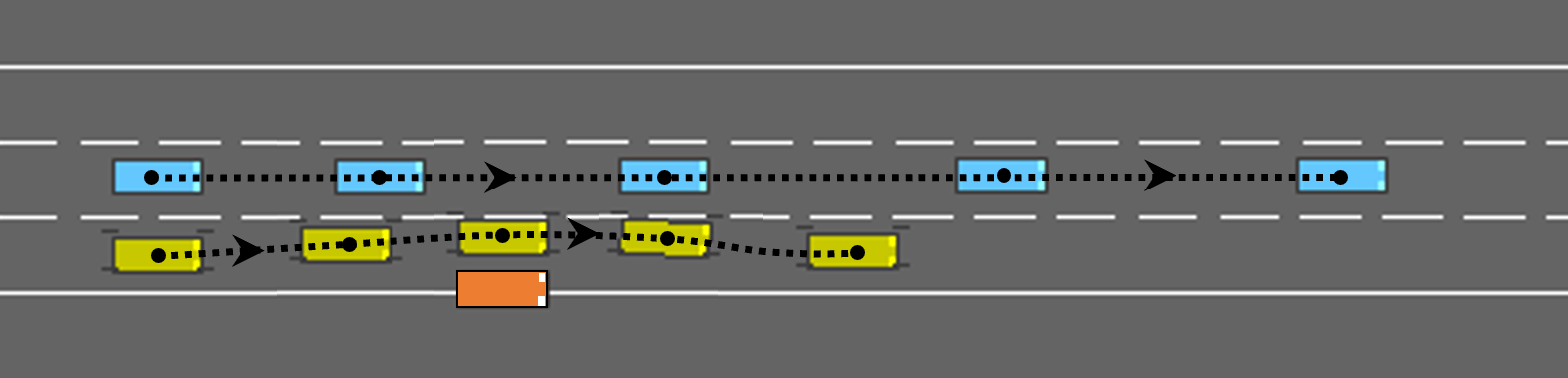}
    \label{subfig:double-parked}
  }
  \vspace{-2mm}
  \caption{Snapshots of road navigation scenarios. The ego vehicle is in yellow, the moving non-ego vehicle is in blue, and the stationary non-ego vehicle is in orange. Dotted lines indicate trajectories of moving agents in the direction indicated by the arrows. \label{fig:road-nav}}
\end{figure*}

\subsection{Intersection Negotiation}
\label{subsec:intersection-navigation}

The rule hierarchy in this example includes seven rules, six of which are identical to the rule hierarchy used in Section~\ref{subsec:road-navigation}. The only new addition is a stop-sign rule that requires the ego vehicle to stop for 1 second in front of a stop sign (denoted by the transparent red square in Fig.~\ref{fig:intersection-navigation}) before moving on. The rule hierarchy is listed in Table~\ref{tab:intersection-rule-hierarchy}. It is worth noting that the stop-sign rule conflicts with the minimum-speed rule; nonetheless, when the ego vehicle approaches a stop sign, the planner will violate the minimum-speed rule in favor of the stop-sign rule owing to their priorities in the rule hierarchy.

\subsubsection{Wait} In this scenario, shown in Fig.~\ref{subfig:intersection-wait}, the ego vehicle in yellow must stop at a stop sign for 1 second before crossing an intersection. A blue non-ego vehicle is driving on the perpendicular lane which has no stop sign. By the end of the ego vehicle's 1-second stop, the blue vehicle is already in the intersection, so the ego vehicle waits for the blue vehicle before crossing the intersection.

\subsubsection{Go} In this scenario, shown in Fig.~\ref{subfig:intersection-go}, the blue vehicle is still very far from the intersection by the time the ego vehicle finishes its 1-second stop at the stop sign, unlike the previous scenario. Therefore, the ego vehicle proceeds to go through the intersection first.

\begin{figure*}[t]
  \centering
  \subfigure[Wait]
  {
    \includegraphics[trim={0 1cm 0 5mm},clip,width=0.47\textwidth]{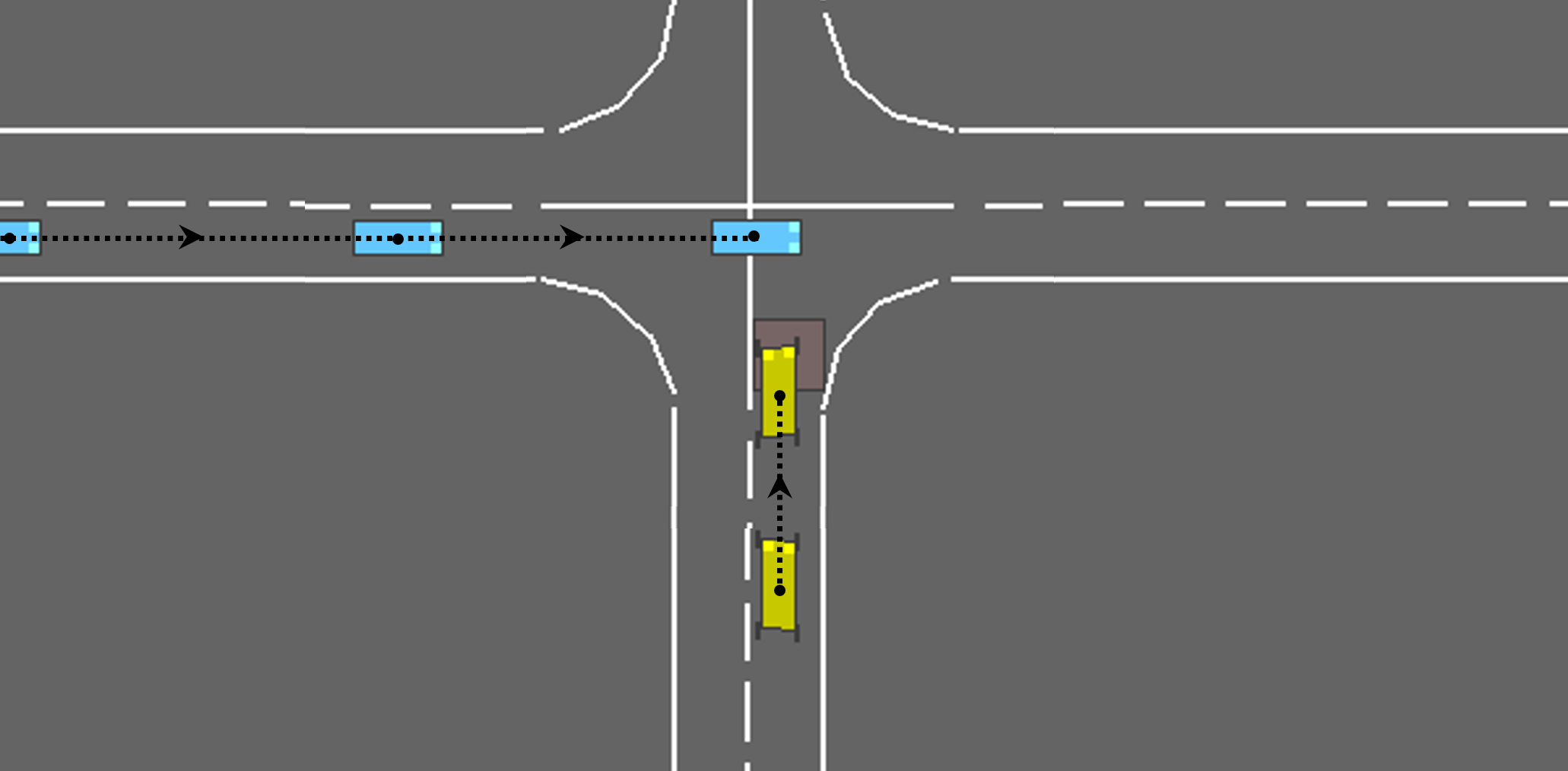}
    \label{subfig:intersection-wait}
  }
  \subfigure[Go]
  {
    \includegraphics[trim={0 1cm 0 5mm},clip,width=0.47\textwidth]{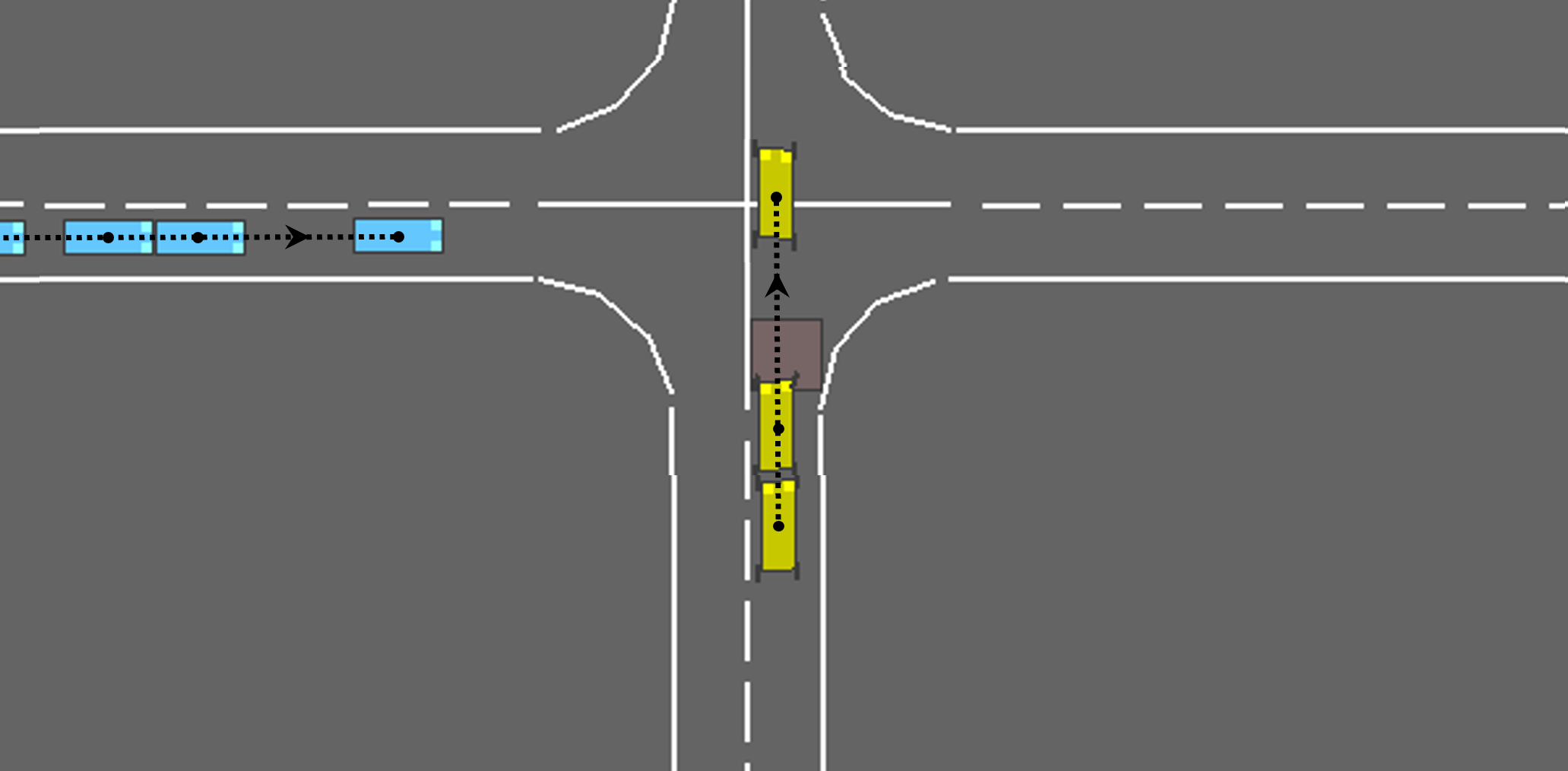}
    \label{subfig:intersection-go}
  }
  \vspace{-2mm}
  \caption{Snapshots of intersection navigation scenarios. The ego vehicle is in yellow and the non-ego vehicle is in blue. The transparent red-box is the stopping zone of the stop-sign. Dotted lines indicate trajectories of moving agents in the direction indicated by the arrows. \label{fig:intersection-navigation}}
  \vspace{-3mm}
\end{figure*}

\subsection{Discussion}
We observe from Table~\ref{tab:time-stats} that our planner, even in the worst-case, is able to generate motion plans within $0.1$ s for road navigation scenarios and within $0.14$ s for intersection negotiation scenarios.
The key reason behind the real-time performance of our planner is the use of the rank-preserving reward function \eqref{eq:reward-sigmoid}. Instead of going through $2^N$ rule-satisfaction combinations (in the worst case), the reward function \eqref{eq:reward-sigmoid} allows us to plan with just two optimizations per planning cycle, as described in Section~\ref{sec:motion-planner}. Consequently, our planning times also exhibit low standard deviation; note that all four road navigation scenarios and both the intersection negotiation scenarios have comparable planning times.

Another noteworthy point about our approach is the robustness of hyperparameters. We emphasize that all hyperparameters---robustness scaling constant, sigmoid sharpness, learning rate, etc.---were same across all scenarios in both settings, road and intersection navigation; besides the inclusion of a new stop-sign rule in the rule-hierarchy for intersection navigation. Finally, we emphasize again that planning with rule hierarchies facilitates complex decision making and seamless adaptation to a rich class of scenarios.

\begin{table}
    \centering
    \begin{tabular}{cc}
        \toprule
        \textbf{Priority} & \textbf{Rule Description} \\ 
        \midrule
        1 & No Collision \\ 
        2 & Do not cross solid lane line \\ 
        3 & Do not cross dashed lane line \\ 
        4 & Stop at stop sign for at least 1 second \\
        5 & Orient along the lane by the end of the planning horizon \\ 
        6 & Speed $\geq 2~\mathrm{m/s}$ \\ 
        7 & Speed $\leq 15~\mathrm{m/s}$ \\ 
        \bottomrule
    \end{tabular}
    \caption{Rule hierarchy for intersection negotiation scenarios. \label{tab:intersection-rule-hierarchy}}
\end{table}

\begin{table}
    \centering
    \resizebox{0.5\textwidth}{!}{
        \begin{tabular}{ccccc}
            \toprule
            \textbf{Scenario} & \textbf{Mean$\pm$Std} (s) & \textbf{Median} (s) & \textbf{Max} (s) & \textbf{Min} (s) \\ 
            \midrule
            Overtake from lane & $0.094\pm 0.002$ & 0.095 & 0.097 & 0.086 \\ 
            Overtake from shoulder & $0.092\pm 0.001$ & 0.092 & 0.094 & 0.086 \\ 
            Stop instead of overtake & $0.091\pm 0.003$ & 0.092 & 0.095 & 0.078 \\ 
            Double-parked vehicle & $0.093\pm 0.001$ & 0.093 & 0.095 & 0.085 \\ 
            Intersection: wait & $0.105\pm 0.017$ & 0.091 & 0.127 & 0.085 \\  
            Intersection: go & $0.104\pm 0.019$ & 0.090 & 0.133 & 0.082 \\  
            \bottomrule
        \end{tabular}
    }
    \caption{Time statistics for planning. The statistics are computed for all planning cycles within a single run of a particular scenario.  \label{tab:time-stats}}
    \vspace{-4mm}
\end{table}

\section{Conclusions and Future Work}
\label{sec:conclusions}

In this paper, we presented a framework for online motion planning of AVs with rule hierarchies. We achieved this by formulating a hierarchy-preserving reward function that allowed us to plan without checking all combinations of rule satisfaction. The rule hierarchies were expressed as STL formulae which allowed us to incorporate temporal constraints, such as wait at the stop-sign before driving on. Finally, we demonstrated the ability of our planner to seamlessly adapt in road navigation and intersection negotiation scenarios.

This work opens up various exciting future directions. As an immediate future direction, we intend to perform closed-loop evaluation of our planner with data-driven trajectory predictors \cite{kamenev2022predictionnet,salzmann2020trajectron++} in state-of-the-art traffic simulators \cite{xu2022bits,nuplan}. Another interesting future direction is to imbue our planner with the ability to reason about estimation and prediction uncertainties by translating them to uncertainties in the rank induced by the rule-hierarchy. Finally, we note that rule hierarchies can model human driving behavior well, as suggested by \cite{helou2021reasonable}; hence, we hope to leverage rule hierarchies for estimating the degree of responsibility and responsiveness exhibited by other non-ego vehicles.

\section*{Appendix}
\label{sec:app}

\begin{proof}[Proof of Theorem~\ref{thm:rank-preserve}]
Let $\rho$ and $\rho'$ be the given robustness vectors for which $r(\rho) < r(\rho')$. Let the first index at which $\rho$ satisfies a rule that is not satisfied by $\rho'$ be
\begin{equation}\label{eq:highest-no-match-idx}
    k := \min\{i~|~\text{step}(\rho_i) > \text{step}(\rho'_i),~i\in \{1,2,\cdots,N\} \}.
\end{equation}
Now, decompose the reward function $R$ as follows:
\begin{align}
    R(\rho) & = \underbrace{\sum_{i=1}^{k-1} a^{N-i+1} \text{step}(\rho_i)}_{=:b} + a^{N-k+1} \nonumber\\
            & + \sum_{i=k+1}^N a^{N-i+1} \text{step}(\rho_i) + \frac{1}{N}\sum_{j=1}^N \rho_j . \nonumber
\end{align}
The first term in the above, defined as a constant $b$, is the same for both $\rho$ and $\rho'$. Since, $\text{step}(\rho_k)$ can only be $0$ or $1$, it follows from \eqref{eq:highest-no-match-idx} that $\text{step}(\rho_k) = 1$ while $\text{step}(\rho'_k)=0$. Hence, the reward for $\rho'$ can be decomposed as follows:
\begin{align}\label{eq:R-rho'}
    R(\rho') = b + 0 + \sum_{i=k+1}^N a^{N-i+1} \text{step}(\rho'_i) + \frac{1}{N}\sum_{j=1}^N \rho'_j .
\end{align}

We make the following claim:\\
\textbf{Claim 1:} $\sum_{i=k+1}^N a^{N-i+1} \text{step}(\rho'_i) < a^{N-k+1} - a$.

Now we will prove this claim. Consider
\begin{align}\nonumber
    \sum_{i=k+1}^N a^{N-i+1} \text{step}(\rho'_i) \leq \sum_{i=k+1}^N a^{N-i+1} = \frac{a(a^{N-k}-1)}{a-1} .
\end{align}
Using $a > 2 \iff a-1 > 1 \iff 1/(a-1) < 1$ above:

\vspace{-3mm}
\small
\begin{align}
    & \sum_{i=k+1}^N a^{N-i+1} \text{step}(\rho'_i) \leq \frac{a(a^{N-k}-1)}{a-1} < a(a^{N-k} - 1) ,
\end{align}
\normalsize
completing the proof of Claim 1.

Now we make a second claim:\\
\textbf{Claim 2:} $\frac{1}{N} \sum_{j=1}^N\rho'_j - a \leq \frac{1}{N} \sum_{j=1}^N\rho_j$.

The proof for this claim immediately follows by using $\rho_j,\rho'_j\in [-a/2,a/2]$ as follows:
\begin{align}
    \frac{1}{N} \sum_{j=1}^N\rho'_j -  \frac{1}{N} \sum_{j=1}^N\rho_j \leq \frac{1}{N} \bigg(\frac{aN}{2} + \frac{aN}{2}\bigg) = a
\end{align}

With both these claims established, use Claim 1 in \eqref{eq:R-rho'} followed by Claim 2 to get
\begin{align}
    R(\rho') < b + a^{N-k+1} + \frac{1}{N}\sum_{j=1}^N \rho'_j - a \leq R(\rho) , \nonumber
\end{align}
completing the proof of this theorem.
\end{proof}

\section*{Acknowledgment}
We are grateful to Ryan Holben for helpful discussions on planning with rule hierarchies and to Wei Xiao for discussing his prior work on optimal control with rule hierarchies.

\addtolength{\textheight}{-4.9cm}   %

\bibliographystyle{ieeetran}
\bibliography{bib}

\end{document}